\pgfplotsset{compat=1.13}
\DeclareMathOperator*{\argmax}{arg\,max}
\def \setReal {\mathbb{R}}
\def \setReach{\mathcal{R}_{\mathrm{dist}}}
\def \state {\boldsymbol{x}}
\def \setState {\mathcal{X}}
\def \setFree {\mathcal{X}_\mathrm{free}}
\def \setObs {\mathcal{X}_\mathrm{obs}}
\def \setControl  {\mathcal{U}}
\def \control {\mathbf{u}}
\def \start {\boldsymbol{x}_\mathrm{start}}
\def \goal {\boldsymbol{x}_\mathrm{goal}}
\def \traj {\sigma(t)}
\def \costFunction {c}
\def \setVertices {{V}}
\def \setEdges {{E}}
\def \setNearVertex {U}
\def \traj {\sigma}
\newtheorem{theorem}{Theorem}
\def\cca#1{%
  \ifdim #1 pt > 0pt%
      \pgfmathsetmacro\calc{ #1 * 50.0 / 2.0}%
      \ifdim \calc pt > 50pt
          \pgfmathsetmacro\calc{50}%
      \fi
      \edef\clrmacro{\noexpand\cellcolor{green!\calc}}%
      \clrmacro%
  \else%
      \pgfmathsetmacro\calc{ (-1) * #1 * 50.0 / 2.0}%
      \ifdim \calc pt > 50pt
          \pgfmathsetmacro\calc{50}%
      \fi
      \edef\clrmacro{\noexpand\cellcolor{red!\calc}}%
      \clrmacro%
  \fi%
  {#1}
}
\title{\LARGE \bf
Dispertio: Optimal Sampling for Safe \\Deterministic Sampling-Based Motion Planning
}
\author{Luigi Palmieri$^{1\dagger}$ and Leonard Bruns$^{2\dagger}$ and Michael 
Meurer$^{3}$ 
and Kai O. Arras$^{1}$ %
\thanks{$^{1}$L.~Palmieri and K.O.~Arras are with Robert Bosch GmbH, Corporate 
Research, Stuttgart, Germany,
	\texttt{\{luigi.palmieri, kaioliver.arras\}@de.bosch.com}.
}
\thanks{$^{2}$L.~Bruns is with RWTH Aachen, Germany and KTH Stockholm, Sweden
	\texttt{leonardb@kth.se}.
}
\thanks{$^{3}$M.~Meurer is with RWTH Aachen, Germany and German Aerospace Center (DLR), Oberpfaffenhofen, Germany
	\texttt{michael.meurer@nav.rwth-aachen.de}.
}
\thanks{$^{\dagger}$The two authors contributed equally to this work.
}
}
\begin{document}

\maketitle
\thispagestyle{empty}
\pagestyle{empty}

\begin{abstract}
A key challenge in robotics is the efficient generation of optimal robot motion 
with safety guarantees in cluttered environments. 
Recently, deterministic optimal sampling-based 
motion planners have been shown to achieve good performance towards this end, in particular in terms of planning efficiency, final solution cost, quality guarantees as well as non-probabilistic completeness.
Yet their application is still limited to relatively simple systems (i.e., 
linear, holonomic, Euclidean state spaces). In this work, we extend 
this technique to the class of symmetric and optimal driftless systems by 
presenting Dispertio, an offline dispersion optimization technique for 
computing sampling sets, aware of differential constraints, for 
sampling-based robot motion planning.
We prove that the approach, when combined with 
PRM*, is deterministically complete and retains asymptotic optimality.
Furthermore, in our experiments we show that the proposed deterministic 
sampling technique outperforms several baselines and alternative methods in 
terms of planning efficiency and solution cost.
\end{abstract}

\section{Introduction}

Motion planning is key to intelligent robot behavior. For motion planning in 
safety-critical applications, where self-driving cars, social or collaborative 
robots operate amidst and work with humans, safety guarantees, explainability 
and deterministic performance bounds are of particular interest. In the past, 
many motion planning approaches have been introduced to improve planning 
efficiency, path quality and applicability across classes of robotic systems. 
Probabilistic sampling-based motion planners 
\cite{lavalle2006planning,lavalle2001randomized,kavraki1996probabilistic} and 
their optimal variants \cite{karamanIJRR2011,janson2015fast} have shown to 
outperform combinatorial approaches \cite{lozano1979algorithm}, especially for 
high-dimensional systems with complex environments and differential constraints.
Sampling-based planners explore the configuration space by sampling states and connecting them to the roadmap, or tree, which represents and keeps track of the spatial connectivity. Typically samples are drawn from a uniform distribution over the state space by an independent and identically 
distributed (i.i.d.) random variable. Biasing techniques towards the goal 
region or promising areas of the configuration space may be used if available \cite{palmieriICRA2016,palmieriICRA2017}. The randomness of the samples set 
ensures good exploration of the configuration space, but comes at the expense 
of stochastic results which may strongly vary for each planning query in terms 
of planning efficiency and path quality. This stochasticity makes the formal 
verification and validation of such algorithms, needed for safety-critical 
applications, difficult to obtain.
\begin{figure}[t!]
	\centering
	\begin{subfigure}[b]{0.49\columnwidth}
		\centering
		\scalebox{0.6}{\includegraphics{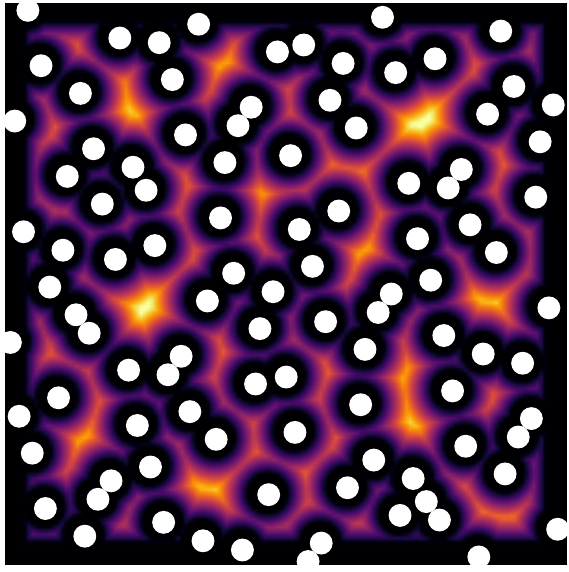}}
			\vspace{-0.048cm}
		\subcaption{Halton}
	\end{subfigure}
	\begin{subfigure}[b]{0.49\columnwidth}
		\centering
		\scalebox{0.6}{\includegraphics{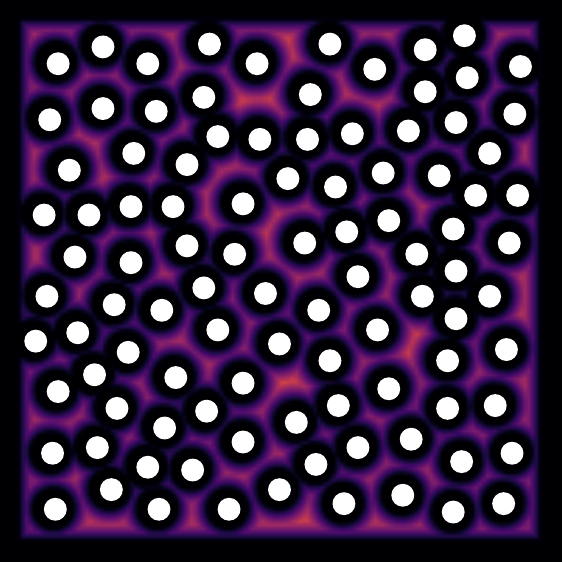}}
		\subcaption{Dispertio}
	\end{subfigure}
	\caption{Comparison of the coverage between $l_2$ low dispersion Halton 
	samples and our optimized samples for a 2D Euclidean case. Bright color 
	highlights uncovered areas. Our approach achieves better coverage than the 
	baseline.}
	\label{fig:cover_girl}
\end{figure}

To address this issue, several authors 
\cite{lavalle2004relationship,janson2018deterministic} proposed to use 
deterministic sets (or sequences). Contrarily to using i.i.d.\ random variables, 
this technique allows to achieve deterministic planning behaviors while still 
getting on par or even better performance. Moreover, as described also in 
\cite{lavalle2004relationship,janson2018deterministic}, deterministic sampling 
allows an easier certification process for the planners (e.g., in terms of final 
cost, clearance from the obstacles). Particularly, as we will see also in our case, 
those approaches have been shown to be complete (i.e., to find a solution) for 
planning queries for which a solution with certain clearance exists.  
However, current approaches limit their applicability to Euclidean spaces \cite{lavalle2004relationship}, systems with linear affine dynamics \cite{janson2018deterministic} and specific driftless ones \cite{poccia2017}.

With the goal to further enhance the usage of deterministic sampling to 
symmetric and optimal driftless systems, in this 
work we present \emph{Dispertio}, an optimization-based approach to 
deterministic sampling. 
The approach computes a sampling set which minimizes the actual dispersion of the samples. To compute the dispersion metric, we need access to a steer function \cite{palmieriIROS2014,reeds1990optimal} that can compute an optimal path connecting two states. We focus our attention on 
uninformed batch-based algorithms (e.g., PRM* \cite{karamanIJRR2011}) where the set of samples can be precomputed offline. We prove that the approach, when combined with PRM*, is 
deterministically complete and retains asymptotic optimality. Furthermore, we systematically compare our approach to the existing baselines \cite{janson2018deterministic,poccia2017}. The experiments demonstrate that our 
approach outperforms the baselines in terms of planning efficiency and overall final path quality. 

%
%
%
%
\begin{figure*}[bth!]
	\centering
	\begin{tikzpicture}
    	\useasboundingbox (0,0) rectangle (0,0);
		\draw [draw=none, fill=gray!30!white, rounded corners] (0,-0.15) rectangle ++(14.12,3.1);
	\end{tikzpicture}
	\begin{subfigure}[b]{0.19\textwidth}
		\centering
		Classical\\grid-search\\[.2cm]
		\scalebox{0.6}{\includegraphics{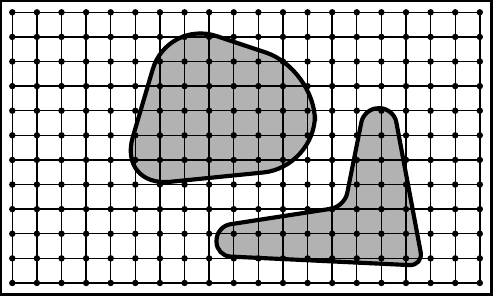}}
	\end{subfigure}
	\begin{subfigure}[b]{0.19\textwidth}
		\centering
		Subsampled\\grid-search\\[.2cm]
		\scalebox{0.6}{\includegraphics{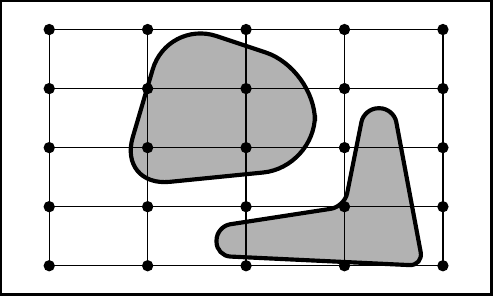}}
	\end{subfigure}
	\begin{subfigure}[b]{0.19\textwidth}
		\centering
		Lattice-based\\grid-search\\[.2cm]
		\scalebox{0.6}{\includegraphics{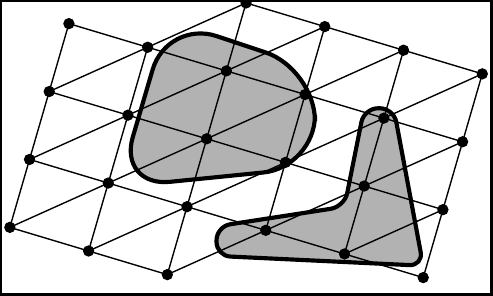}}
	\end{subfigure}
	\begin{subfigure}[b]{0.19\textwidth}
		\centering
		Quasi-random\\roadmap\\[.2cm]
		\scalebox{0.6}{\includegraphics{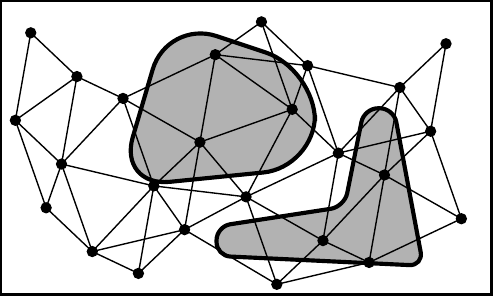}}
	\end{subfigure}
	\begin{subfigure}[b]{0.19\textwidth}
		\centering
		Probabilistic\\roadmap\\[.2cm]
		\scalebox{0.6}{\includegraphics{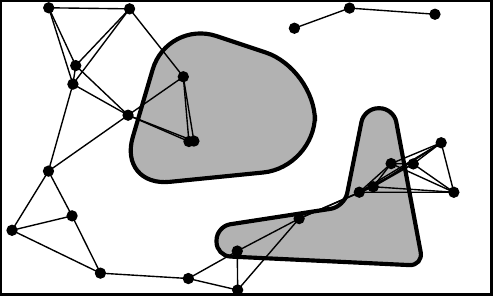}}
	\end{subfigure}
	\caption{Range of possible sampling and roadmap types as introduced by LaValle \cite{lavalle2004relationship}. The highlighted ones are deterministic.}
	\label{fig:sampling_type}
\end{figure*}
\section{Related Work}
\label{sec:related_work}
LaValle et al.~\cite{lavalle2004relationship} highlight the relationship 
between grid-based and probabilistic planning, see 
Fig.~\ref{fig:sampling_type}. 
The authors advocate that grid-based planners and probabilistic sampling-based 
planners all belong to the same class of sampling-based algorithms and are 
extremes of a broad spectrum of sampling strategies, ranging from deterministic to highly stochastic techniques. They highlight the benefits of deterministic 
sampling sets (or sequences) such as grids \cite{lengyel1990real,nashAAAI2007}, 
lattices \cite{pivtoraikoIROS2011}, or Halton and Sukharev sequences 
\cite{van1935verteilungsfunktionen,sukharev1971optimal}.
In particular, LaValle et al.~\cite{branicky2001quasi,lavalle2004relationship} show that \emph{dispersion} (see 
Sec.~\ref{sec:ourapproach} for the definition) is the {deciding} metric when it 
comes to resolution complete path planning. The reason is that dispersion 
provides lower bounds on the coverage of the space. The authors prove 
that low dispersion sampling, using for example Halton and Sukharev sequences 
\cite{van1935verteilungsfunktionen,sukharev1971optimal}, provides deterministic 
completeness guarantees on 
finding feasible paths, which i.i.d.\ sampling can 
only probabilistically provide, i.e., the planner will find a solution with a probability of 1 as the number of samples goes to infinity.
Our approach follows the ideas presented by \cite{lavalle2004relationship} and 
extends their results to motion planning with differential constraints.

While the authors in \cite{lavalle2004relationship} focus on feasibility in deterministic sampling-based motion planning, Janson et al.~\cite{janson2018deterministic} extend the approach to address optimality. 
The authors show that with a particular choice of low dispersion sampling 
($l_2$ dispersion of order $O(n^{-1/D})$, e.g., Halton sequence, with $D$ being the state space 
dimension of the considered system), optimal 
sampling-based planners (i.e.,  PRM* \cite{karamanIJRR2011,schmerling2015optimaldriftless}) can use a lower 
connection radius compared to i.i.d.\ sampling thus requiring a lower 
computational complexity, i.e., $r_n \in \omega(n^{-1/D})$. Moreover, they show 
that the cost or suboptimality of the returned solution can be bounded, based 
on the dispersion. The latter work limits its applicability to Euclidean spaces and to systems having linear affine dynamics. In comparison our method can be applied also to symmetric and optimal driftless systems with differential constraints.

Poccia \cite{poccia2017} proposes an approach for generating a set of 
deterministic samples for nonholonomic systems. The approach needs an explicit 
and careful analysis of the system equations to come up with a sampling scheme. 
Differently, our approach provides an algorithm that only needs the availability of an optimal \emph{steer} function, a common assumption for optimal sampling-based planning \cite{karamanIJRR2011,schmerling2015optimaldriftless}.

Unlike state-lattice approaches \cite{pivtoraikoIROS2011}, which can be seen as part of the class of deterministic sampling-based planners,  
our approach does not rely on a regular grid or a set of pre-defined motion primitives. Instead, it optimizes the position 
of the samples based on the dispersion metric that accounts for the differential constraints of the system.
\section{Our Approach}
\label{sec:ourapproach}
In this section, we formalize the problem that we aim to solve and the 
novel dispersion definition. We will then describe our algorithm and analyze its properties. 
%
\subsection{Problem Definition}
Let $\setState \subset \setReal^D$ be a manifold defining a configuration 
space, $\setControl \subset \setReal^M$ the symmetric control space, $\setObs \subset 
\setState$ the obstacle space and $\setFree = \setState \setminus \setObs$ the 
free space. 
A driftless control-affine system can be described by a 
differential equation as
\begin{equation}
\label{eq:smoothFunction}
\dot{\state}(t)= \sum_{j=1}^{M}g_j(\state(t))\control(t)
\end{equation}
where $\state(t) \in \setState $, $ \control(t) \in \setControl $, for all $t$, 
and $g_1, \ldots, g_M$ being the system vector fields on $\setState$. For the 
remainder of the paper we will focus on symmetric systems for 
which an optimal steer function exists.

Let $\gamma$ denote a planning query, defined by its initial state 
$\start \in \mathcal{X}$ and goal state $\goal \in 
\mathcal{X}$. We define the set of all possible 
solution paths for a given query $\gamma$ as $\Sigma_\gamma$, with $\sigma \in 
\Sigma_\gamma : [0,1] \to \mathcal{X}_\mathrm{free}$ being one of 
the possible solution paths such that $\sigma(0)=\start$ 
and $\sigma(1)=\goal$. The arc-length of a path $\sigma$ 
is defined by $l(\sigma) = \int_{0}^{1} ||\dot{\sigma}(t)||_2~dt$. The 
arc-length 
induces a \emph{sub-Riemannian} distance $\mathrm{dist}$ on $\setState$: 
$\mathrm{dist}(\state,\boldsymbol{z})=\inf_\sigma l(\sigma)$, i.e., the length 
of the optimal path connecting $\state$ to 
$\boldsymbol{z}$, which due to our assumptions is also symmetric. Let $\sigma^*$ denote the set of all points along a path 
$\sigma$. The $\mathrm{dist}$-clearance of a path $\sigma$ is defined as 
\begin{equation}
\delta_\mathrm{dist}(\sigma)=\sup \big\{r\in \mathbb{R}~|~
\setReach({\state}, r)\subseteq \mathcal{X}_\mathrm{free} 
~\forall {\state}\in \sigma^*\big\}
\end{equation}
where $\setReach({\state}, r)$ is the cost-limited reachable set  (closed if not 
otherwise stated) for the system in Eq.~\ref{eq:smoothFunction} centered at ${\state}$ 
within a path length of $r$ (e.g., a sphere for Euclidean systems):
\begin{equation}
\label{eq:reachableset}
\setReach(\state, r) = \big\{\boldsymbol{z} \in \setState~| 
~\mathrm{dist}(\state, \boldsymbol{z}) \leq r\big\}.
\end{equation}
The $\mathrm{dist}$-clearance of a query $\gamma$ is defined as 
\begin{equation}
\delta_\mathrm{dist}(\gamma)=\sup \big\{\delta_\mathrm{dist}(\sigma)~|~\sigma \in 
\Sigma_\gamma\big\}
\end{equation}
and denotes the maximum clearance that a solution path to a query can have.
An optimal sampling-based 
algorithm solves the 
following $\hat{\delta}_\mathrm{dist}$-robustly feasible motion planning problem $\mathcal{P}$: given a query $\hat{\gamma}$ with a $\mathrm{dist}$-clearance of $\delta_\mathrm{dist}(\hat{\gamma})>\hat{\delta}_\mathrm{dist}$,  
find a control $\control(t) \in \setControl$ with domain $[0,1]$ such 
that the unique trajectory $\traj$ satisfies Eq.~\ref{eq:smoothFunction}, is 
fully contained in the free 
space $\setFree \subseteq \setState$ and goes from $\start$ to $\goal$. Moreover it minimizes, asymptotically, a defined cost function 
$\costFunction:\Sigma_\gamma\rightarrow\setReal_{\geq 0}$.
Hereinafter, we will use the term \emph{steer} function to indicate a 
function that generates a path in $\setState$ connecting two specified states. 
In particular we will use steer functions that solve an optimal control 
problem, i.e., minimizing the cost $\costFunction$.

In the following sections, we describe the approach to solve $\mathcal{P}$ by using an 
optimization-based sampling technique that minimizes the actual 
dispersion of the sampling set used by batch-processing algorithms (e.g., 
PRM*\footnote{Due to space limitations, we will not detail the algorithm PRM*. 
A reader interested to the properties of the algorithm can refer to 
\cite{karamanIJRR2011}.}, see Alg.~\ref{alg:prmstar}).

\subsection{Dispersion for Differentially Constrained Systems}
We use and modify the dispersion definition for a sampling set $\mathcal{S} = \left\{ 
\state_0, \state_1, \dots, \state_n \right\} \subset \setState$, introduced by 
Niederreiter \cite{niederreiter1992random} and also adopted by 
\cite{lavalle2004relationship,janson2018deterministic}:
\begin{equation}
d_\mathrm{dist}=\sup\{r>0~|~\exists \boldsymbol{x}\in \mathcal{X} \text{ with } 
\setReach(\boldsymbol{x}, r) \cap \mathcal{S}=\emptyset\} \label{eq:dispersion}.
\end{equation}
Intuitively the dispersion can be considered as the radius of the largest (open) ball 
(i.e., size of the reachable set) that does not contain an element of $S$. In 
the context of differentially constrained motion planning, we propose to adjust 
the dispersion 
metric to explicitly require the reachable sets 
$\setReach(\boldsymbol{x}, r)$ 
to be fully contained in $\mathcal{X}$:
\begin{equation}
\label{eq:dispersionkinodyn}
	\begin{split}
		\tilde{d}_\mathrm{dist}=\sup\{r>0~|~&\exists \boldsymbol{x}\in 
		\mathcal{X} \text{ with } \setReach(\boldsymbol{x}, r) \cap 
		\mathcal{S}=\emptyset \\ & \land \setReach(\boldsymbol{x}, r) \subseteq 
		\mathcal{X}\}.
	\end{split}
\end{equation} 
We also require this metric to respect possible identifications of the 
configuration space. 
Differently from previous approaches 
\cite{lavalle2004relationship,poccia2017,janson2018deterministic}, we will 
compute the dispersion metric by 
numerically computing offline the reachable sets 
$\setReach(\boldsymbol{x}, r)$ where $r > 0$ 
is the path length obtained by an optimal controller. 


\subsection{The Dispersion Optimization Algorithm}
As discussed by \cite{lavalle2001randomized,janson2018deterministic} 
multi-query sampling-based planners, such as PRM* or FMT*, 
generate as initial step a set $S$ of collision free samples, see line 2 of 
Alg.~\ref{alg:prmstar}. 
Instead of using i.i.d.\ random variables, or an existing deterministic 
technique to generate $\mathcal{S}$ (e.g., Halton sequence, 
\cite{lavalle2004relationship,poccia2017,janson2018deterministic}), we propose 
to compute the set by minimizing the dispersion of Eq.~\ref{eq:dispersionkinodyn}. Our algorithm named 
{Dispertio} is outlined in Alg.~\ref{alg:disperspion_optimization}. The 
general idea of the algorithm is to pick in each step the sample (up to 
$n < N_{CS}$) that maximizes the distance to both the defined 
border of the configuration space as well as to the next sample. In other words 
we want to greedily put the sample into the position that currently defines the 
dispersion metric. 

We propose to make this task computationally feasible by discretizing the configuration space 
into a fine grid of $N_\mathrm{CS}$ equidistant (distance could be different 
per dimension) cells. The dispersion tensor $\boldsymbol{D}$ keeps track of the 
minimum distance to either the border or closest sample for each grid cell (in 
Alg.\ref{alg:disperspion_optimization} we denote the dispersion value at the 
cell or position $\boldsymbol{c}$ as $D_{\boldsymbol{c}}$), computed by 
solving Eq.~\ref{eq:dispersionkinodyn} using an optimal steer function. 

If it is possible to compute the distance to the border quickly (e.g., 
Euclidean case), we initialize $\boldsymbol{D}$ with the distance to 
the border for each grid cell, otherwise $\boldsymbol{D}$ is initialized with $\infty$, line 1 of Alg.~\ref{alg:disperspion_optimization}. 
In this case, we check whether the update step to a potential sample would affect any border sample. If 
this is the case, we will not add the sample to $\mathcal{S}$, but instead 
run an update step on the border sample without adding it.

At each algorithm iteration, we generate 
a sample $\state_i$ that maximizes the current dispersion tensor $\boldsymbol{D}$ and add it to $\mathcal{S}$, see lines 3--7 
of Alg.~\ref{alg:disperspion_optimization}. For a given sample, $\boldsymbol{D}$ 
is updated (line 5 of Alg.~\ref{alg:disperspion_optimization}) 
with a flood-fill algorithm, by only expanding cells for which the dispersion has 
been updated. In this way we are exploiting the connectedness of time-limited reachable sets. The sequence for the flood-fill algorithm can be pre-computed to 
prevent double checking of already tested cells.

Despite having a time complexity exponential in dimensions due to the 
flood-fill algorithm \big(i.e., $\mathcal{O}(n\xi^D)$, with the constant $\xi>0$ being related to discretization and complexity of $\mathrm{dist}$), the algorithm is a 
feasible pre-computation step for many systems (e.g., Reeds-Shepp space, 6D 
kinematic chain using Euclidean distance).
Once the set $\mathcal{S}$ has been generated, we can then use it in a motion 
planning algorithm such as PRM* (Alg.~\ref{alg:prmstar}). PRM*-edges 
are generated with the same steer function used to optimize the set $\mathcal{S}$.
\begin{algorithm}[t]
	\footnotesize
	\caption{PRM*. $\start$ is the start state, $\goal$ the goal state, $n$ 
		the desired number of samples.}\label{alg:prmstar}
		\begin{algorithmic}[1]
		\Procedure{PRM*}{}
			\State $\mathcal{S} \leftarrow$ \Call{SampleFree}{$n$}
			\State $\setVertices \leftarrow \{\start, \goal\} \cup \mathcal{S}$
      		\For {$v \text{ in } \setVertices$}
	      		\State $\setNearVertex \leftarrow$ \Call{Near}{$\setVertices, v, r_{|\setVertices|}$}
   	  				\For {$u \text{ in } \setNearVertex$}
       					\If{ \Call{CollisionFree}{$v,u$}}
    						\State{$\setEdges \leftarrow \setEdges \cup \{(v,u)\}$}
				    	\EndIf	     
       				\EndFor
      		\EndFor
      		\State return \Call{ShortestPath}{$\start, \goal, (\setVertices, \setEdges)$}
		\EndProcedure
		\end{algorithmic}
\end{algorithm}
\begin{algorithm}[t]
	\footnotesize
	\caption{Dispersion Optimization}\label{alg:disperspion_optimization}
	\begin{algorithmic}[1]
		\Procedure{Dispertio}{}
		\State{$\boldsymbol{D} \gets $} \Call{DistanceToBorder}{}
		\While{$|\mathcal{S}|<n$}
		\State{$\boldsymbol{x}_i\gets \argmax_{\boldsymbol{c}}D_{\boldsymbol{c}}$}
		\State \Call{UpdateDistanceMatrix}{$\boldsymbol{D},\boldsymbol{x}_i$}
		\State{$\mathcal{S}\gets \mathcal{S} \cup \{ \boldsymbol{x}_i \}$}
		\EndWhile
		\EndProcedure
	\end{algorithmic}
\end{algorithm}

\subsection{Dispertio-PRM* Analysis}
In this section we detail how PRM*~\cite{karamanIJRR2011}, when using our deterministic sampling
approach, retains the completeness and asymptotic optimality properties as in 
\cite{lavalle2004relationship,poccia2017,janson2018deterministic}.
\begin{figure}[ht!]
	\centering
	\scalebox{0.95}{\includegraphics{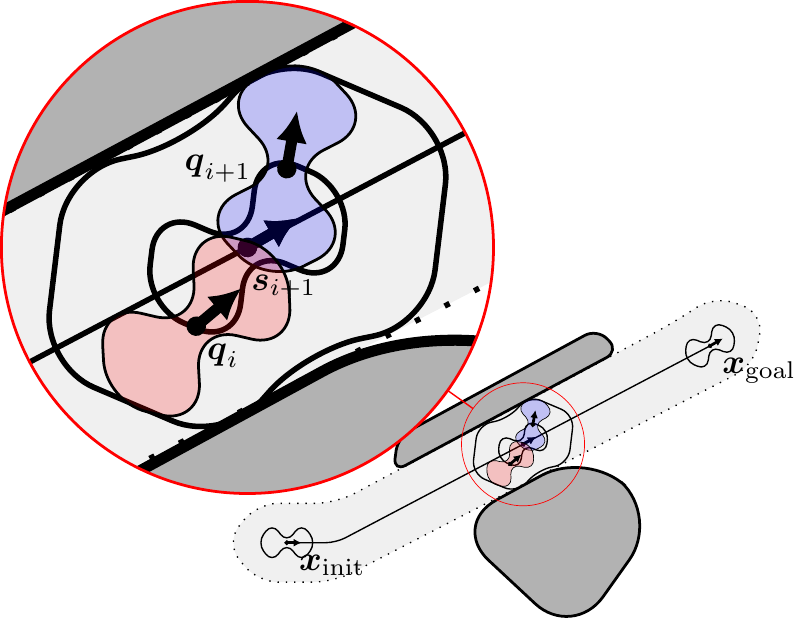}}
	\caption{Visualization of the proof of completeness. $\boldsymbol{s}_{i+1}$ 
		is placed along the (unknown) path of maximum clearance $\sigma$ such 
		that 
		$\boldsymbol{q}_i$ lies on the border of 
		$\setReach(\boldsymbol{s}_{i+1}, 
		\tilde{d}_\mathrm{dist})$. Since $\setReach(\boldsymbol{q}_i, 
		\tilde{d}_\mathrm{dist})$ and $\setReach(\boldsymbol{q}_{i+1}, 
		\tilde{d}_\mathrm{dist})$ overlap and are fully contained in 
		$\setReach(\boldsymbol{s}_{i+1}, 2\tilde{d}_\mathrm{dist})$ the path 
		from 
		$\boldsymbol{q}_i$ to $\boldsymbol{q}_{i+1}$ is 
		collision free.}\label{fig:proof}
\end{figure}
\subsubsection{Completeness}
\label{sec:completeness}
We show that the approach deterministically returns a solution if it exists and 
returns failure otherwise \cite{lavalle2004relationship,janson2018deterministic,poccia2017}. 
Note that this is a 
stronger property than \emph{probabilistic completeness} 
\cite{lavalle2006planning}.
\begin{theorem}
	Given a set of samples $\mathcal{S}$ with known dispersion
	$\tilde{d}_\mathrm{dist}$ and considering general driftless systems for which we have steer functions that are optimal and 
	symmetric, we can solve all planning queries $\gamma$ with 
	Alg.~\ref{alg:prmstar} using a connection radius $r>2\tilde{d}_\mathrm{dist}$ having 
	clearance of
	\begin{equation}
	\label{eq:desiredclearance}
	\delta_\mathrm{dist}(\gamma) > 2\tilde{d}_\mathrm{dist}.
	\end{equation}
\end{theorem}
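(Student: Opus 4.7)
The plan is to mimic the construction depicted in Fig.~\ref{fig:proof}: fix a witness path $\sigma \in \Sigma_\gamma$ with $\delta_\mathrm{dist}(\sigma) > 2\tilde{d}_\mathrm{dist}$ and build inductively a chain of roadmap vertices $\boldsymbol{q}_0 = \boldsymbol{x}_\mathrm{start}, \boldsymbol{q}_1, \dots, \boldsymbol{q}_N = \boldsymbol{x}_\mathrm{goal}$, each drawn from $\mathcal{S} \cup \{\boldsymbol{x}_\mathrm{start}, \boldsymbol{x}_\mathrm{goal}\}$, such that consecutive pairs lie within the PRM* connection radius and are joined by a collision-free optimal steer edge. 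Once the chain exists, the \textsc{ShortestPath} call on line~12 of Alg.~\ref{alg:prmstar} is guaranteed to return a feasible solution.

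For the induction step I would, starting from $\boldsymbol{q}_i$, walk along $\sigma$ to place an auxiliary anchor $\boldsymbol{s}_{i+1}$ with $\mathrm{dist}(\boldsymbol{q}_i,\boldsymbol{s}_{i+1}) = \tilde{d}_\mathrm{dist}$ (or set $\boldsymbol{s}_{i+1} := \boldsymbol{x}_\mathrm{goal}$ and terminate when the remaining suffix of $\sigma$ is shorter). The key fact I would invoke is that $\boldsymbol{s}_{i+1}\in\sigma^{*}$, so by the clearance assumption $\setReach(\boldsymbol{s}_{i+1},\tilde{d}_\mathrm{dist})$ sits strictly inside $\setFree\subseteq\mathcal{X}$, and hence Eq.~\ref{eq:dispersionkinodyn} forces the existence of a sample $\boldsymbol{q}_{i+1}\in\mathcal{S}$ with $\mathrm{dist}(\boldsymbol{s}_{i+1},\boldsymbol{q}_{i+1}) \leq \tilde{d}_\mathrm{dist}$ (at termination take $\boldsymbol{q}_{i+1} = \boldsymbol{x}_\mathrm{goal}$). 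Using symmetry of $\mathrm{dist}$ and the triangle inequality I would then bound
\begin{equation*}
\mathrm{dist}(\boldsymbol{q}_i,\boldsymbol{q}_{i+1}) \leq \mathrm{dist}(\boldsymbol{q}_i,\boldsymbol{s}_{i+1}) + \mathrm{dist}(\boldsymbol{s}_{i+1},\boldsymbol{q}_{i+1}) \leq 2\tilde{d}_\mathrm{dist} < r,
\end{equation*}
so that the \textsc{Near} query on line~5 of Alg.~\ref{alg:prmstar} returns $\boldsymbol{q}_{i+1}$ as a neighbor of $\boldsymbol{q}_i$.

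The next step is to certify that the edge is collision-free. Letting $\sigma_i$ denote the optimal steer curve from $\boldsymbol{q}_i$ to $\boldsymbol{q}_{i+1}$, optimality gives $l(\sigma_i)\leq 2\tilde{d}_\mathrm{dist}$, and any intermediate point $\boldsymbol{p}\in\sigma_i^{*}$ satisfies $\mathrm{dist}(\boldsymbol{q}_i,\boldsymbol{p}) + \mathrm{dist}(\boldsymbol{p},\boldsymbol{q}_{i+1}) = l(\sigma_i)$. At least one of the two summands is therefore $\leq\tilde{d}_\mathrm{dist}$, which together with either $\mathrm{dist}(\boldsymbol{s}_{i+1},\boldsymbol{q}_i)\leq\tilde{d}_\mathrm{dist}$ or $\mathrm{dist}(\boldsymbol{s}_{i+1},\boldsymbol{q}_{i+1})\leq\tilde{d}_\mathrm{dist}$ (and another invocation of symmetry and the triangle inequality) places $\boldsymbol{p}\in\setReach(\boldsymbol{s}_{i+1},2\tilde{d}_\mathrm{dist})\subseteq\setFree$ by clearance. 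Finite termination of the induction follows because the arc-length of $\sigma$ is finite and each step consumes at least $\tilde{d}_\mathrm{dist}$ of it.

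The part I expect to be delicate is the step that extracts a sample at distance \emph{at most} $\tilde{d}_\mathrm{dist}$ from the supremum defining dispersion in Eq.~\ref{eq:dispersionkinodyn}: it requires leveraging the closedness of $\setReach(\cdot,\cdot)$ (noted right before Eq.~\ref{eq:reachableset}) together with the strict slack $\delta_\mathrm{dist}(\sigma) > 2\tilde{d}_\mathrm{dist}$, which gives room to instantiate the dispersion definition at a radius slightly above $\tilde{d}_\mathrm{dist}$ while still keeping the reach set inside $\mathcal{X}$, and then to pass to the limit. A second place where care is needed is the repeated use of symmetry of $\mathrm{dist}$: both the edge-length bound and the containment of the steer curve in $\setReach(\boldsymbol{s}_{i+1},2\tilde{d}_\mathrm{dist})$ depend on it, so the argument would genuinely break for asymmetric driftless systems and this is what restricts the theorem to the symmetric case.
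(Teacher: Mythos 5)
Your proposal is correct and follows essentially the same argument as the paper's proof: anchor points placed along the maximum-clearance path, the dispersion definition forcing a sample into each anchor's $\tilde{d}_\mathrm{dist}$-reachable set, and the factor-of-two clearance (plus symmetry) guaranteeing both that consecutive samples lie within the connection radius and that the optimal steer edge between them stays in $\setReach(\boldsymbol{s}_{i+1}, 2\tilde{d}_\mathrm{dist}) \subseteq \setFree$. If anything, you are more explicit than the paper at two points it glosses over --- extracting a sample from the supremum-based dispersion definition via closedness, and certifying intermediate points of the steer edge through the additive length decomposition rather than the paper's ``must pass through the intersection'' remark.
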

\begin{proof}
	To see this, first note that $\setReach(\boldsymbol{x}, r)$ for optimal 
	steering functions, is equivalent to time-limited reachable sets of the system. 
	Hence, trajectories from $\boldsymbol{x}$ to any other point in 
	$\setReach(\boldsymbol{x}, r)$ will also be fully contained in 
	$\setReach(\boldsymbol{x}, r)$. Given a query $\gamma$ with clearance 
	$\delta(\gamma)>2 \tilde{d}_\mathrm{dist}$, there exists a solution 
	$\sigma$ with $\mathcal{R}(\boldsymbol{s}_i,2 \tilde{d}_\mathrm{dist})\in 
	\mathcal{X}_\mathrm{free},~\forall \boldsymbol{s}_i \in \sigma^*$. First 
	note that due to the dispersion definition, there 
	must be a sample of $\mathcal{S}$ in both 
	$\setReach(\start,  \tilde{d}_\mathrm{dist})$ and 
	$\setReach(\goal,  \tilde{d}_\mathrm{dist})$. Thus it is 
	possible to connect the start and goal configuration to the roadmap. It remains 
	to show that a $\mathrm{dist}$-clearance of $\delta_\mathrm{dist}(\gamma) > 2\tilde{d}_\mathrm{dist}$ is sufficient to find a path from $\start$ to $\goal$. Let 
	$\boldsymbol{q}_0$ and $\boldsymbol{q}_N$ denote the samples that 
	$\start$ and $\goal$ are connected 
	to, respectively. By taking $\boldsymbol{s}_1$ along a path $\sigma$ and such that $\boldsymbol{q}_0$ lies on the border of 
	$\setReach(\boldsymbol{s}_1, \tilde{d}_\mathrm{dist})$ we see, due to the dispersion definition in Eq.~\ref{eq:dispersionkinodyn}, that there must be another sample in the reachable set, denoted 
	by $\boldsymbol{q}_1$. At this point we only know that the path from 
	$\boldsymbol{q}_0$ over $\boldsymbol{s}_1$ to $\boldsymbol{q}_1$ must be 
	collision free.  Since only $\boldsymbol{q}_0$ and $\boldsymbol{q}_1$ are known, we require a factor of 2 
	in the clearance (i.e., $2\tilde{d}_\mathrm{dist}$ in Eq.~\ref{eq:desiredclearance}), which ensures that the path from $\boldsymbol{q}_0$ to $\boldsymbol{q}_1$ must be collision-free. To see this, note that due the system symmetry both 
	$\setReach(\boldsymbol{q}_\mathrm{0}, \tilde{d}_\mathrm{dist})$ and 
	$\setReach(\boldsymbol{q}_\mathrm{1}, \tilde{d}_\mathrm{dist})$ must be contained 
	in $\mathcal{R}(\boldsymbol{s}_1, 2\tilde{d}_\mathrm{dist})\subset \mathcal{X}_\mathrm{free}$. We also know that the intersection
	$\setReach(\boldsymbol{q}_\mathrm{0}, \tilde{d}_\mathrm{dist}) \cap 
	\setReach(\boldsymbol{q}_\mathrm{1}, \tilde{d}_\mathrm{dist})$ contains $\boldsymbol{s}_1$ and is thus nonempty. The trajectory from $\boldsymbol{q}_0$ 
	to $\boldsymbol{q}_1$ must pass through this intersection and is hence 
	collision-free. The same idea can now be repeated until the path to 
	$\boldsymbol{q}_N$ is found. Fig.~\ref{fig:proof} visualizes the proof.
\end{proof}

\begin{figure*}[t!]
	\centering
	\begin{subfigure}[b]{0.24\textwidth}
		\centering
		\scalebox{0.6}{\includegraphics{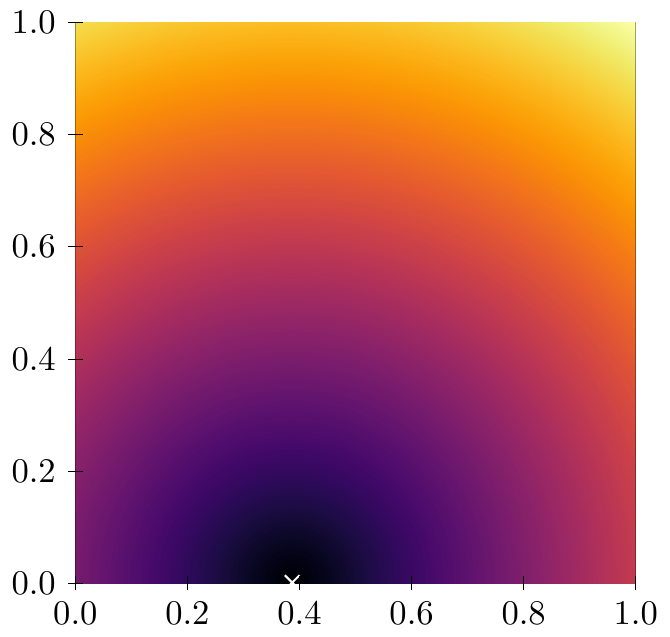}}
	\end{subfigure}
	\begin{subfigure}[b]{0.24\textwidth}
		\centering
		\scalebox{0.6}{\includegraphics{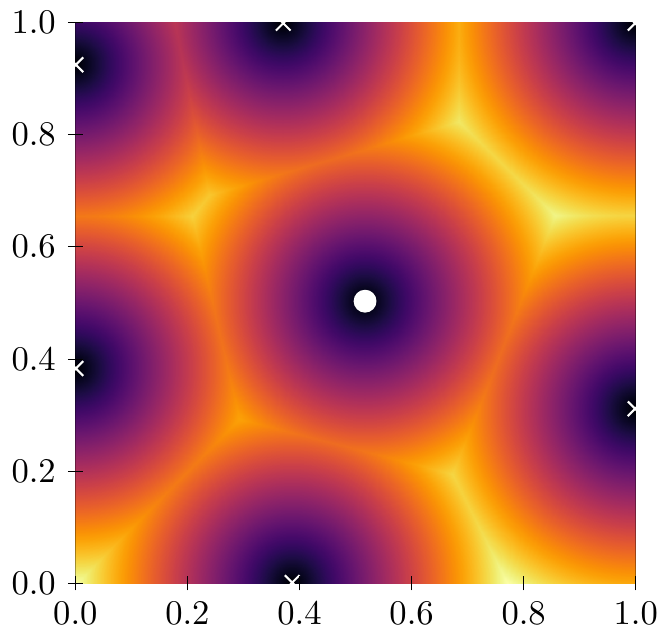}}
	\end{subfigure}
	\begin{subfigure}[b]{0.24\textwidth}
		\centering
		\scalebox{0.6}{\includegraphics{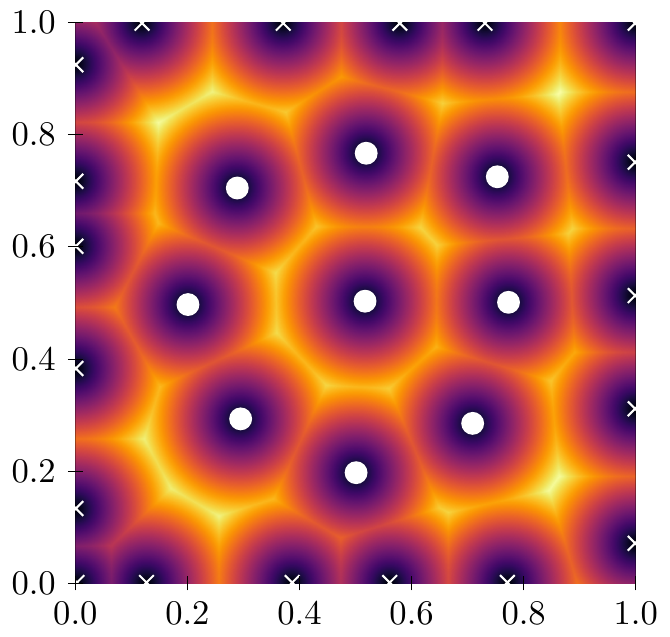}}
	\end{subfigure}
	\begin{subfigure}[b]{0.24\textwidth}
		\centering
		\scalebox{0.6}{\includegraphics{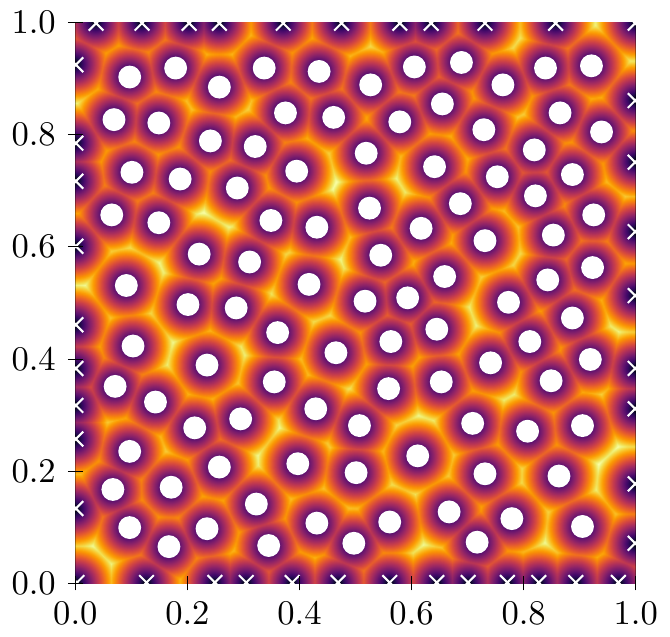}}
	\end{subfigure}
	\caption{Progression of the algorithm in 2D Euclidean space. The background 
	color indicates the distance to the next sample (i.e., the distance matrix 
	$\boldsymbol{D}$). The white crosses and dots show the processed border 
	points and actual samples respectively.}
	\label{fig:algorithm}
\end{figure*}
\subsubsection{Asymptotic Optimality}
\label{sec:asymoptimal}
In this section, following \cite{janson2018deterministic} we will show that 
PRM* is asymptotically optimal when using the sampling 
sets generated by {Dispertio}. Particularly, Janson et al.~\cite{janson2018deterministic} show 
that PRM* is asymptotically optimal when using deterministic sampling sets in $D$ 
dimensions whose dispersion is upper-bounded by $\gamma~n^{-1/D},~ \gamma>0$.
Next, we will show how the sampling sets generated by our approach reach the 
same asymptotic dispersion (see Theorem \ref{theorem:dispersion}), i.e., lower $l_2$-dispersions, for all driftless 
control-affine systems, therefore retaining the PRM* asymptotic optimality.
For the special Euclidean case, we show that the algorithm reaches the same 
asymptotically optimal dispersion as for example the Halton sequence. Note that 
for simplicity we are using a simplified version of the algorithm, without 
discretization and assuming the distance to the border is known. Throughout the discussion we again assume 
that the distance function $\mathrm{dist}$ is symmetric and optimal. Also note 
that $\mathcal{R}(\boldsymbol{x},r)$ now denotes the \emph{open} ball of radius $r$ 
at $\boldsymbol{x}$ and $V(\cdot)$ the volume of a set.
\begin{theorem}
	\label{theorem:dispersion}
	Under the assumption that the discretization of the space does not influence the placement of the samples, Alg.~\ref{alg:disperspion_optimization} dispersion can be bounded by 
	\begin{equation}
		n V(\mathcal{R}(\boldsymbol{x},d_n / 2)) \leq V(\mathcal{X})
	\end{equation}
	where $d_n$ denotes the dispersion defined for a distance function 
	$\mathrm{dist}$ as in Eq. \ref{eq:dispersion}, when $n$ samples have been 
	picked, i.e., $|\mathcal{S}|=n$. This yields for the $D$-dimensional 
	Euclidean case an asymptotic behavior of 
	\begin{equation}
		d_n \in  \mathcal{O}\left(n^{-1/D}\right)
	\end{equation} and the driftless control-affine case 
	\begin{equation}
		d_n \in \mathcal{O}\left(n^{-1/\tilde{D}}\right)
	\end{equation}
	with $\tilde{D}=\sum_{i=1}^D w_i$, where $w_i$ are the weights of the boxes 
	approximating the reachability space for driftless control-affine systems (see ball-box 
	theorem \cite{montgomery2002tour,chow2002systeme}).
\end{theorem}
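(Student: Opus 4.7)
The plan is a standard volume packing argument combined with the ball--box theorem. The key observation is that, by the greedy rule in Alg.~\ref{alg:disperspion_optimization}, when sample $\boldsymbol{x}_i$ is added at step $i$, its $\mathrm{dist}$-distance to every previously placed sample $\boldsymbol{x}_j$ (for $j<i$) and to the boundary of $\mathcal{X}$ equals $D_{\boldsymbol{x}_i}=d_{i-1}$, the current dispersion. Since the dispersion is monotonically non-increasing as samples accumulate, $d_{i-1}\geq d_n$ for all $i\leq n$; hence every pair of samples satisfies $\mathrm{dist}(\boldsymbol{x}_i,\boldsymbol{x}_j)\geq d_n$ and $\mathcal{R}(\boldsymbol{x}_i,d_n)\subseteq\mathcal{X}$ for every $i$.

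Using the symmetry of $\mathrm{dist}$ and the triangle inequality, I would then conclude that the open balls $\mathcal{R}(\boldsymbol{x}_i,d_n/2)$, $i=1,\dots,n$, are pairwise disjoint: any common point would force $\mathrm{dist}(\boldsymbol{x}_i,\boldsymbol{x}_j)<d_n$. Each such ball is also contained in $\mathcal{X}$, being a subset of $\mathcal{R}(\boldsymbol{x}_i,d_n)$. Summing volumes, and using that $V(\mathcal{R}(\boldsymbol{x},d_n/2))$ is independent of $\boldsymbol{x}$ (by translation invariance in the Euclidean case, and by the homogeneous dilation structure of sub-Riemannian geometries in the driftless case), yields the main packing inequality $nV(\mathcal{R}(\boldsymbol{x},d_n/2))\leq V(\mathcal{X})$.

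To obtain the stated rates I would then specialize the volume. For the Euclidean case, $V(\mathcal{R}(\boldsymbol{x},r))=c_D r^D$ with $c_D$ the volume of the $D$-dimensional unit ball, so the packing inequality gives $d_n\leq 2\bigl(V(\mathcal{X})/(nc_D)\bigr)^{1/D}$ and hence $d_n\in\mathcal{O}(n^{-1/D})$. For driftless control-affine systems I would invoke the ball--box theorem \cite{montgomery2002tour,chow2002systeme}: sub-Riemannian balls of small radius $r$ are sandwiched between weighted boxes whose side lengths scale as $r^{w_i}$ in the $i$-th privileged direction, so $V(\mathcal{R}(\boldsymbol{x},r))\asymp r^{\tilde D}$ with $\tilde D=\sum_{i=1}^D w_i$, and plugging this into the packing inequality yields $d_n\in\mathcal{O}(n^{-1/\tilde D})$.

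The main obstacle is making the volume term uniform in $\boldsymbol{x}$. In Euclidean space this is immediate, but for a genuine sub-Riemannian geometry the ball--box theorem only guarantees comparability constants that a~priori depend on the base point and on a compact neighborhood. Since $\mathcal{X}$ is compact and we are interested in the asymptotic regime $d_n\to 0$, one can extract uniform upper and lower bounds on the ball volumes, but this step, together with the careful handling of balls that touch the boundary of $\mathcal{X}$, is where the real work lies. A secondary technicality is to justify that the greedy value $D_{\boldsymbol{x}_i}$ reached at each step does coincide with the dispersion from Eq.~\ref{eq:dispersionkinodyn}, which follows directly from the update rule for $D_{\boldsymbol{c}}$.
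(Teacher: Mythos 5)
Your proposal is correct and follows essentially the same route as the paper: the greedy rule gives pairwise sample separation of at least $d_{n-1}\geq d_n$ (and separation from the border), the open balls of radius $d_n/2$ are therefore disjoint and contained in $\mathcal{X}$, and the rates follow from the explicit Euclidean ball volume and from the ball--box theorem (the paper invokes Lemma~II.2 of Schmerling et al.\ with a regularity parameter $A_{\mathrm{max}}$ to make the sub-Riemannian volume bound uniform, which is precisely the technicality you flag). The only cosmetic differences are that you correctly state \emph{pairwise} disjointness where the paper loosely writes that the common intersection of all balls is empty, and your claim that the new sample's distance to \emph{every} earlier sample equals $d_{i-1}$ should read ``is at least $d_{i-1}$, with the minimum attaining it,'' though the conclusion you draw from it is the right one.
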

\begin{proof}
	To prove the asymptotic behavior of the algorithm, let us consider the case 
	in which the discretization of the space has no effect on the placement of 
	samples\footnote{For brevity, 
	we remove the explicit $\mathrm{dist}$ from the dispersion, but it is 
	implied to be the distance function used in the algorithm and the reachable 
	sets $\mathcal{R}$.}. The key argument to analyze the asymptotic behavior 
	of the algorithm is to realize that the $n$th sample is, by construction, 
	placed such that its distance to the closest neighbor is $d_{n-1}$.  Due to 
	that, after $n$ samples have been picked, we can note that
	\begin{equation}
		d_{n-1}\leq \min_{\boldsymbol{y}\in\mathcal{S}\setminus\boldsymbol{x}}\mathrm{dist}(\boldsymbol{x},\boldsymbol{y})\leq 2 d_{n-1} \quad\forall\boldsymbol{x}\in\mathcal{S},
	\end{equation}
	where the second inequality follows from the symmetry and optimality 
	assumption of $\mathrm{dist}$. From the first inequality it follows that 
	(note that the ball is open)
	\begin{equation}
		\mathcal{R}(\boldsymbol{x}, d_{n-1})\cap \mathcal{S}=\emptyset\quad\forall \boldsymbol{x}\in \mathcal{S}.
	\end{equation}
	In addition, because of symmetry and optimality, the intersection of all open 
	balls of radius $d_{n-1}/2$ must be empty, i.e.,
	\begin{equation}
		\bigcap\limits_{\boldsymbol{x}\in \mathcal{S}}\mathcal{R}(\boldsymbol{x}, d_{n-1}/2)=\emptyset.
	\end{equation}

	Note that $d_n\leq d_{n-1}$ and with $n$ samples being in $\mathcal{S}$ we 
	can state that
	\begin{equation}
		n V\big(\mathcal{R}(\boldsymbol{x}, d_{n}/2)\big) \leq V(\mathcal{X})\label{eq:ineq}
	\end{equation}
	must hold. To upper bound the dispersion for a number of samples $n$ we 
	would optimally use an explicit term for the volume 
	$V\big(\mathcal{R}(\boldsymbol{x}, d_{n}/2)\big)$, but if no such term 
	exists (as for general sub-Riemannian balls), we need to use a lower bound, 
	for example by using the ball-box theorem. Let us first consider the case of a 
	$D$-dimensional Euclidean space $\mathcal{X}$. In that case we obtain
	\begin{equation}
		n \alpha d_n^D \leq V(\mathcal{X})
	\end{equation}
	and thus
	\begin{equation}
		d_n \leq \frac{V(\mathcal{X})^{1/D}}{\alpha^{1/D} n^{1/D}} \in 
		\mathcal{O}\left(n^{-1/D}\right)
	\end{equation}
	with $\alpha>0$. This shows that in the Euclidean case, the achieved 
	asymptotic 
	dispersion is the same as for $l_2$ low dispersion sequences (e.g., Halton). 
	For the driftless control-affine case we can use 
	the same argumentation as in \cite{poccia2017}. Under the assumption 
	that the system 
	is sufficiently regular we can find a parameter $A_\mathrm{max}$ such that 
	\begin{equation}
		\mathrm{Box}^w\left(\boldsymbol{x},\frac{d_n}{2 A_\mathrm{max}}\right)\subseteq \mathcal{R}(\boldsymbol{x}, d_{n}/2)
	\end{equation}
	and according to Lemma II.2 by Schmerling et al. 
	\cite{schmerling2015optimaldriftless} the volume is given by
	\begin{equation}
		V\Bigg(\mathrm{Box}^w\left(\boldsymbol{x},\frac{d_n}{2 A_\mathrm{max}}\right)\Bigg)=\left(\frac{d_n}{2 A_\mathrm{max}}\right)^{\tilde{D}}
	\end{equation}
	with $\tilde{D}=\sum_{i=1}^D w_i$. We can rewrite Eq.~\ref{eq:ineq} as
	\begin{equation}
		n \left(\frac{d_n}{2 A_\mathrm{max}}\right)^{\tilde{D}} \leq V(\mathcal{X})
	\end{equation}
	and thus 
	\begin{equation}
		d_n \leq \frac{V(\mathcal{X})^{1/\tilde{D}} 2 A_\mathrm{max} }{ n^{1/\tilde{D}} } \in \mathcal{O}\left(n^{-1/\tilde{D}}\right).
	\end{equation}
\end{proof}

Note that if the number of samples approaches the discretization of the space, 
they will actually converge to a Sukharev grid \cite{sukharev1971optimal}. 
Hence, in the Euclidean case, the asymptotic 
dispersion is still $\mathcal{O}\left(n^{-1/D}\right)$, but in the general 
case, we would need to inner-bound the reachable set with a Euclidean ball, 
which would lead to rather crude approximations as shown by Janson et al.~\cite{janson2018deterministic} for the linear affine case. Thus, especially for 
nonholonomic systems, a grid of high resolution may be important to capture the shape of the reachable sets. Fig.~\ref{fig:dispersion_progress} numerically compares the dispersion for the Reeds-Shepp case after $n$ samples for i.i.d., Halton and the proposed approach.

Given that our set $S$ has the same asymptotic dispersion as $l_2$ low 
dispersion 
sequences, our approach retains the asymptotic analysis carried out in 
\cite{janson2018deterministic,poccia2017} and it allows the usage of a PRM* 
connection radius $r_n \in \omega(n^{-1/D})$.
\section{Evaluation}
\label{sec:evaluation}

In this section we describe the experiments to evaluate how our 
approach performs in terms of planning efficiency and path quality compared to a set of 
baselines.

To this end, we design two main 
experiments. In the first experiment, we compare our approach against the 
baselines (uniform i.i.d.\ samples, Halton samples 
\cite{lavalle2004relationship}, 
Poccia's approach \cite{poccia2017}, state-lattice approach 
\cite{pivtoraikoIROS2011}) for a car-like kinematic systems (i.e., Reeds-Shepp (RS)
\cite{reeds1990optimal}), 
over a subset of maps from the benchmark \emph{moving-ai} 
\cite{sturtevant2012benchmarks}, i.e., city maps, see Fig.~\ref{fig:qualitativeresults} for example maps. The benchmark contains maps with several narrow corridors, and the planner needs to perform complex maneuvers (i.e., fully exploiting the full maneuverability), to let the car achieve its goal.
We use a minimum turning radius $\rho=5\textrm{~m}$ and plan in environments of 
different size with $w$ being the width of the map. For the state 
lattice, sets of motion primitives have been chosen after an informal 
validation and are shown in Fig.~\ref{fig:primitives}. The actual dispersion 
is reported as $\tilde{d}_\mathrm{rs}$ and the number of drawn samples as $n_\mathrm{all}$. 

In the {second experiment}, we compare the approach to the baselines 
on a set of randomized maps and random planning queries. 
To show the general applicability of the algorithm we also 
benchmark it for a 6D kinematic chain in the 2D plane (comparing it to Halton 
sequence and i.i.d.\ samples). In this case, each joint either has an angle 
$\theta_i\in [-3,3]$ with $i=1,...,6$, or we plan in an identified space (i.e., the arms can wrap around) with $\theta_i\in [-\pi,\pi] / \sim$. We use as distance function the 2-norm 
in joint space (respecting the possible identification).

In both experiments we evaluate the approach in terms of cost (i.e., path 
length) and success rate, and show planning efficiency by plotting 
the cost progression. 
Additionally we compare the trend for the dispersion 
(Eq.~\ref{eq:dispersionkinodyn}) conditioned on the number of samples for the car-like kinematics, obtained 
by our approach and the different baselines, see 
Fig.~\ref{fig:dispersion_progress}.

We use OMPL \cite{sucan2012open} and adopt its PRM* implementation (we 
made it deterministic by removing the random walk expansion step), with the 
default $k$-nearest connection strategy of $k_n = e\left(1+1/D\right)\log n$, 
which ensures asymptotic optimality for all the samplers. 
The same nearest neighborhood search and collision checkers (only different for 
examined systems) are used for all experiments and they are run on a machine 
with Xeon E5-1620 CPU and 32GB of RAM.
\begin{figure}[t!]
	\centering
	\scalebox{0.9}{\includegraphics{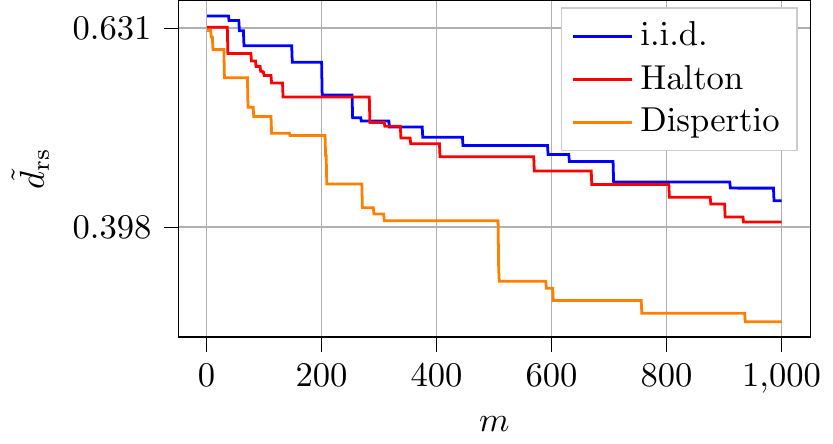}}
	\caption{Dispersion trend for the Reeds-Shepp 
	case ($\eta=1.0$, obstacle free environment). Our approach obtains a better 
	dispersion than the baselines, thus achieving a better coverage of the 
	state space as also shown in Fig.~\ref{fig:cover_girl}.}\label{fig:dispersion_progress}
\end{figure}
\begin{figure}[t!]
	\centering
	\begin{subfigure}[b]{0.49\columnwidth}
		\centering
		\scalebox{0.7}{\includegraphics{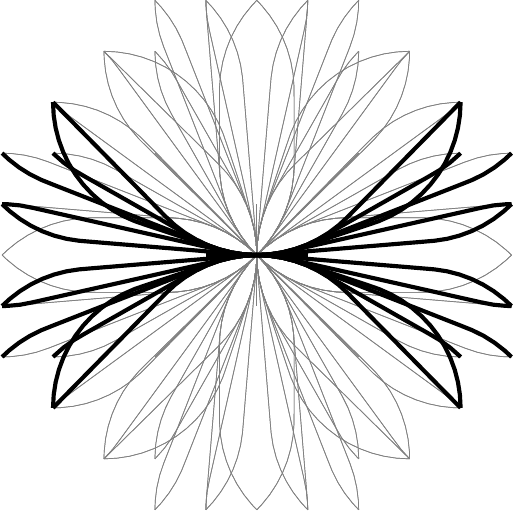}}
		\subcaption{$2\,\mathrm{m}$ grid}
	\end{subfigure}
	\begin{subfigure}[b]{0.49\columnwidth}
		\centering
		\scalebox{0.7}{\includegraphics{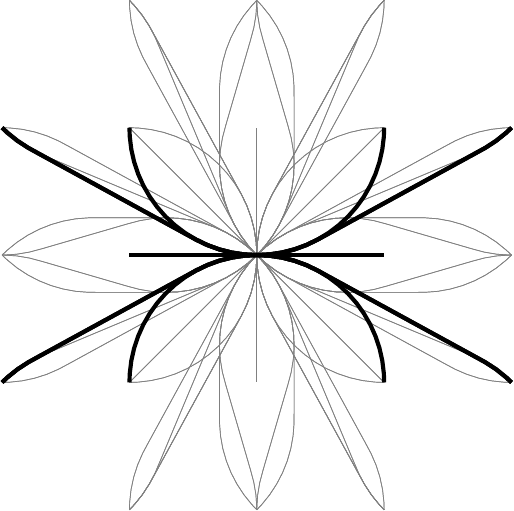}}
		\subcaption{$5\,\mathrm{m}$ grid}
	\end{subfigure}
	\caption{The motion primitive sets, used for the state lattice approaches 
		in Table \ref{table:city_results_normal} and 
		\ref{table:city_results_big}, respectively.}
	\label{fig:primitives}
\end{figure}
\section{Results and Discussion}
\label{sec:discussion}
We collect the evaluation's results in Tables 
\ref{table:city_results}-\ref{table:randommaps}. The tables' scores 
report how often an approach (on the table row) 
generates a better solution than another (on the table column). 
Whenever an approach is better, the score was changed by $+1$, by $0$ for a 
draw (i.e., both fail to find a solution), and $-1$ if the approach yielded the 
higher cost. Results are shown for different ratios $\eta=\rho / 
w$. In Table \ref{table:city_results}, a green cell highlights that the 
approach on the table row has a better performance of the one indicated on the 
table column, red otherwise. Table \ref{table:randommaps} reports only the 
scores obtained by Dispertio against the baselines.

\begin{figure}[t!]
	\centering
	\begin{subfigure}[b]{0.32\columnwidth}
		\centering
		\includegraphics[width=\columnwidth]{./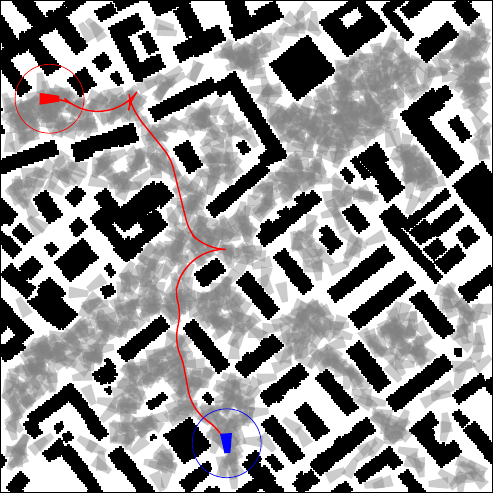}
		\subcaption{i.i.d.}
	\end{subfigure}
	\begin{subfigure}[b]{0.32\columnwidth}
		\centering
		\includegraphics[width=\columnwidth]{./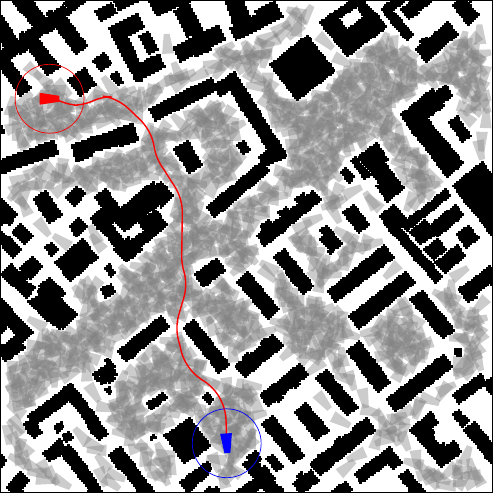}
		\subcaption{Halton}
	\end{subfigure}
	\begin{subfigure}[b]{0.32\columnwidth}
		\centering
		\includegraphics[width=\columnwidth]{./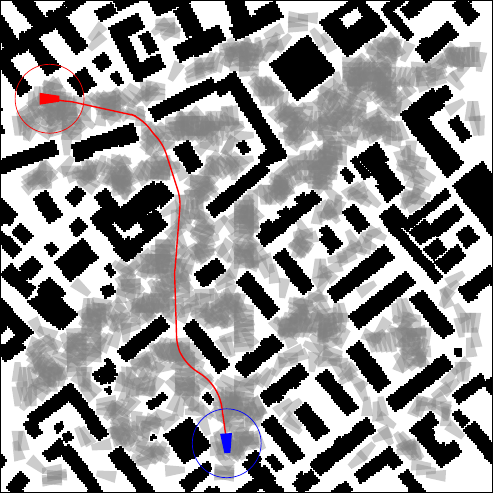}
		\subcaption{Dispertio}
	\end{subfigure}
	\begin{subfigure}[b]{0.49\columnwidth}
		\centering
		\includegraphics[width=\columnwidth]{./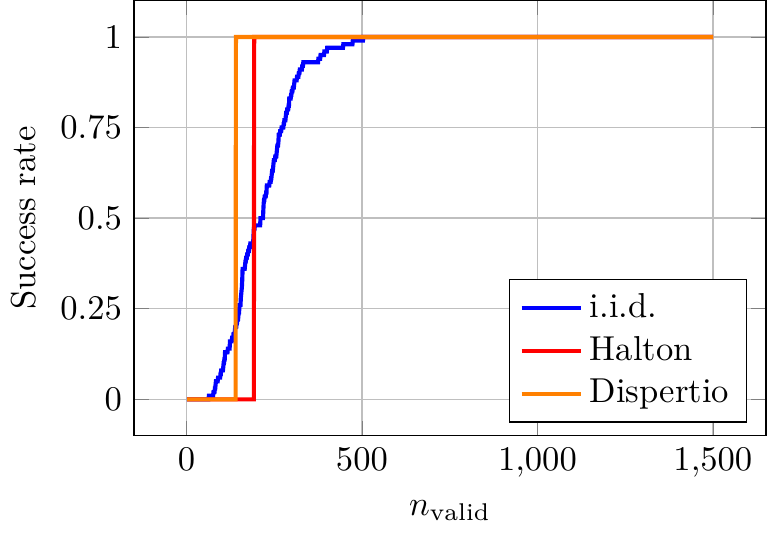}
		\subcaption{Success rate}
	\end{subfigure}
	\begin{subfigure}[b]{0.49\columnwidth}
		\centering
		\includegraphics[width=\columnwidth]{./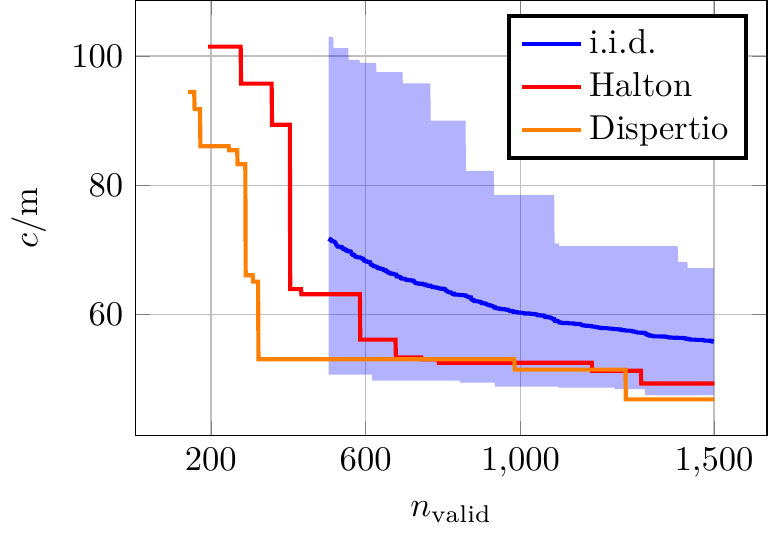}
		\subcaption{Cost}
	\end{subfigure}
	\caption{Qualitative comparison of i.i.d., Halton and Dispertio. The top 
	row shows example paths obtained after $1500$ valid samples connecting 
		starts (in \textbf{red}) with goals (in \textbf{blue}). The gray 	
		footprints represent the roadmap's vertices. The bottom row shows 
		success rate and cost for this example.}
	\label{fig:qualitativeresults}
\end{figure}
\emph{Experiment 1)}~Table \ref{table:city_results} shows the results of the 
first experiment considering 13 maps with 50 queries each (i.e., a total of 650 
planning queries, thus results have been normalized with 
$\sigma_\mathrm{rand}=\sqrt{650}$). 
Overall our approach achieves better costs and a higher 
success rate compared to the baselines. In general, Halton is the second best placed. 
State-lattice performs poorly, indicating that more effort is required for 
their motion primitives design (i.e., possibly also using a notion of dispersion in control space).
Fig.~\ref{fig:qualitativeresults} shows an example planning query for i.i.d., 
Halton sampling and our approach. It reports the obtained paths, the trend for the success rate and the cost progression. 
The blue range shows the minimum and maximum cost observed in these 
runs for the i.i.d.\ sampler. The cost results are only shown for success rates 
of $100\%$.
Cost and success rate progressions of Fig.~\ref{fig:qualitativeresults} show 
how our approach is faster in getting an initial good solution, and faster (as the number of samples increases) in converging to lower cost solutions in those cluttered and narrow scenarios.

\emph{Experiment 2)}~ Table \ref{table:randommaps} shows 
the results for the second experiment. They are normalized by 
$\sigma_\mathrm{rand}=\sqrt{1000}$, with 1000 being the the amount of random 
planning queries. Also for the second experiment, our approach achieves better 
performance in terms of final cost solution even in higher dimensional spaces. 
Furthermore, in very cluttered environments (with $\eta=1.0$) our approach  
achieves on average a 10\% higher success rate than the baselines, indicating how it can better exploit the knowledge of the nonholonomic constraints (i.e., maneuvering capabilities) in narrower scenarios. 

Moreover as reported in Fig.~\ref{fig:dispersion_progress}, Dispertio has a lower dispersion value at each iteration (i.e., numbers 
of samples) than the baselines, mainly due to the fact that it better exploits the knowledge of the system dynamics (i.e., by using the 
steer function and the reachable set computation).
\begin{table}[t]
	\centering
	\begin{subtable}{\columnwidth}
		\centering
		\begin{tabular}{*{6}{c}}
		\hline
		&  i.i.d.  &  Halton  &  Poccia  & {Dispertio} &  Lattice \\
		\hline
		i.i.d.  & $\cdot$  & \cca{-3.41}  & \cca{-3.06}  & \cca{-5.33}  & \cca{21.93} \\
		Halton  & \cca{3.41}  & $\cdot$  & \cca{-0.08}  & \cca{-1.26}  & \cca{22.28} \\
		Poccia  & \cca{3.06}  & \cca{0.078}  & $\cdot$  & \cca{-1.06}  & \cca{21.53} \\
		\textbf{Dispertio}   & \cca{5.33}  & \cca{1.26}  & \cca{1.06}  & $\cdot$  & \cca{22.75} \\
		Lattice & \cca{-21.93}  & \cca{-22.28}  & \cca{-21.53}  & \cca{-22.75}  & $\cdot$ \\
		\hline
		\end{tabular}
		\subcaption{PRM*, Reeds-Shepp, $\eta=0.1$, $n_\mathrm{all}=5000$}\label{table:city_results_normal}
	\end{subtable}
	\begin{subtable}{\columnwidth}
		\centering
		\begin{tabular}{*{6}{c}}
			&  i.i.d.  &  Halton  &  Poccia  & {Dispertio}  &  Lattice \\
			\hline
			i.i.d.  & $\cdot$  & \cca{-4.79}  & \cca{-4.04}  & \cca{-9.18}  & \cca{21.69} \\
			Halton  & \cca{4.79}  & $\cdot$  & \cca{1.26}  & \cca{-4.31}  & \cca{23.3} \\
			Poccia & \cca{4.04}  & \cca{-1.26}  & $\cdot$  & \cca{-4.98}  & \cca{23.06} \\
			\textbf{Dispertio} & \cca{9.18}  & \cca{4.31}  & \cca{4.98}  & $\cdot$  & \cca{23.65} \\
			Lattice & \cca{-21.69}  & \cca{-23.3}  & \cca{-23.06}  & \cca{-23.65}  & $\cdot$ \\
		\end{tabular}
		\subcaption{PRM*, Reeds-Shepp, $\eta=0.05$, $n_\mathrm{all}=3200$}\label{table:city_results_big}
	\end{subtable}
	\caption{Path quality results of all the methods for the city-maps 
	benchmark \cite{sturtevant2012benchmarks}. Dispertio obtains on average 
	better solutions against all the baselines.}\label{table:city_results}
\end{table}
\begin{table}[t]
	\centering
	\begin{subtable}{\columnwidth}
		\centering
		\begin{tabular}{l*{3}{c}}
		&  i.i.d.  &  Halton  &  Poccia \\
		\hline
		RS, $\eta=1$, $n_\mathrm{all}=1500$  &  \cca{4.87}  & \cca{4.11}  & \cca{3.1}  \\
		RS, $\eta=0.25$, $n_\mathrm{all}=1500$ & \cca{6.26}  & \cca{4.08}  & \cca{1.20} \\
		RS, $\eta=0.1$, $n_\mathrm{all}=1500$  & \cca{10.69}  & \cca{6.7}  & \cca{2.47} \\
		RS, $\eta=0.05$, $n_\mathrm{all}=1500$  & \cca{11.61}  & \cca{7.15}  & \cca{4.59} \\
		KC, $\{[-\pi,\pi]/\sim\}^6$, $n_\mathrm{all}=30000$  & \cca{2.28}  & \cca{1.01}  & $\cdot$ \\
		KC, $[-3,3]^6$, $n_\mathrm{all}=30000$  & \cca{7.78}  & \cca{7.78}  & $\cdot$ \\
		\end{tabular}
	\end{subtable}
	\caption{Path quality performance of Dispertio against the baselines, on 
	randomized maps and queries in different 
	spaces (Reed-Shepp and 6D Kinematic Chain).}
	\label{table:randommaps}
\end{table}

\section{Conclusions}
\label{sec:conclusions}
In this work we extend deterministic sampling-based 
motion planning to the class of symmetric and optimal driftless systems, by 
proposing Dispertio, an algorithm for optimized deterministic 
sampling set generation. When used in combination with PRM*, we 
prove that the approach is deterministically complete and retains asymptotic 
optimality. In the evaluation, we show that our sampling technique outperforms 
state-of-the-art methods in terms of solution cost and planning efficiency, 
while also converging faster to lower cost solutions. As future work, we are 
interested in extending the approach towards non-uniform sampling schemes, for example to exploit learned priors, and to systems with drift. 

\section*{Acknowledgment}
This work has been partly funded from the European Union's Horizon 2020 
research and innovation programme under grant agreement No 732737 (ILIAD).

\bibliographystyle{plain}
\footnotesize
\bibliography{references}
\end{document}